\title{\LARGE \bf
Variance-Based Risk Estimations in Markov Processes via Transformation with State Lumping
}
\author{Shuai Ma and Jia Yuan Yu% <-this % stops a space$^{1}$
%\thanks{*This work was not supported by any organization}% <-this % stops a space
\thanks{Shuai Ma and Jia Yuan Yu are with the Concordia Institute of Information System Engineering, Concordia University, Montreal, Canada. Email:	
{\tt\small m\_shua@encs.concordia.ca, jiayuan.yu@concordia.ca}.}%
} %$^{1}$
\newtheorem{theorem}{Theorem}
\newtheorem{corollary}{Corollary}
\newtheorem{definition}{Definition}
\crefname{equation}{}{}
\newcommand{\rom}[1]{\uppercase\expandafter{\romannumeral #1\relax}}
\begin{document}

\maketitle
\thispagestyle{empty}
\pagestyle{empty}

%%%%%%%%%%%%%%%%%%%%%%%%%%%%%%%%%%%%%%%%%%%%%%%%%%%%%%%%%%%%%%%%%%%%%%%%%%%%%%%%
\begin{abstract}

%As the second central moment of random variable, 
Variance plays a crucial role in risk-sensitive reinforcement learning, and most risk measures can be analyzed via variance.
In this paper, we consider two law-invariant risks as examples: mean-variance risk and exponential utility risk.
With the aid of the state-augmentation transformation (SAT), we show that, the two risks can be estimated in Markov decision processes (MDPs) with a stochastic transition-based reward and a randomized policy.
To relieve the enlarged state space, a novel definition of isotopic states is proposed for state lumping, considering the special structure of the transformed transition probability.
%Since the variance formula requires a deterministic reward function, we implement the SAT on the Markov model.
In the numerical experiment, we illustrate state lumping in the SAT, errors from a naive reward simplification, and the validity of the SAT for the two risk estimations. 

\end{abstract}

%%%%%%%%%%%%%%%%%%%%%%%%%%%%%%%%%%%%%%%%%%%%%%%%%%%%%%%%%%%%%%%%%%%%%%%%%%%%%%%%

\section{INTRODUCTION}

In reinforcement learning (RL), the standard criterion is the expectation of (discounted) total reward.
However, stronger reliability guarantees are required in many problems, especially where the small probability events have serious consequences, such as self-driving cars and medical diagnosis.
In these cases, a risk-sensitive criterion should be considered.
In sequential decision making problems, a risk-sensitive criterion refers to a risk measure, or a risk function, which assigns a scalar to a reward sequence $ \{ R_t \} $.
In this paper, we focus on law-invariant risk measures~\cite{kusuoka2001law}, which are functions mapping a set of random variables to the real numbers.
In an infinite time-horizon Markov decision process (MDP), a law-invariant risk measure is usually on the return $\sum_{t=1}^{\infty} \gamma^{t-1} R_t $, where $ \gamma \in (0,1) $ is the discount factor.
Two risk measures are considered as examples: mean-variance risk and exponential utility risk.

We show that, with the return variance formula and the state-augmentation transformation (SAT), the two risks can be estimated in an MDP with a stochastic transition-based reward and a randomized policy.
%In a given MDP, the two risk measures can be estimated by the return variance.
The variance can be calculated in a Markov process, but with a deterministic reward function only.
Besides, a number of RL methods have similar requirements~\cite{borkar2002q,garcia2015comprehensive,huang2017risk,chow2017risk,berkenkamp2017safe} even in a risk-sensitive scenario. %,shen2014risk,gilbert2016quantile,junges2016safety
However, the reward functions are usually stochastic in many practical problems.
It has been shown that, when the objective is risk-sensitive, and the reward needs to be converted to a simple form, the SAT should be implemented instead of a reward simplification~\cite{shuai2019satAAAI}.
In this paper, we present the SAT in a homomorphism version, and thoroughly discuss its pros and cons.
A numerical experiment shows how an SAT preserved the variance in an MDP with a randomized policy, comparing with the reward simplification.

% SAT, MDP, variance, risk measures.
The contribution of this paper is twofold.
Firstly, we extend the SAT with a state lumping theorem to relieve the enlarged state space. Secondly, we estimate the two law-invariant risk measures with the return variance.
In Section~\rom{2}, we review different types of internal risk measures in MDPs, and provide two variance-based estimation formulas.
In Section~\rom{3}, we give the MDP notations, restate the SAT as a homomorphism of MDP. %and discuss the pros and cons of the SAT.
In Section~\rom{4}, we propose a novel definition of isotopic states for state lumping in transformed Markov process, illustrate the SAT in a toy MDP example, and show the error on the two risks from naive reward simplification.
In Section~\rom{5}, we have the conclusions and discuss the future research.

\section{RISK MEASURES}
\label{secRisk}
In RL, uncertainty is studied from two perspectives. 
One is the \textit{external} uncertainty, which refers to the parameter uncertainty or disturbance.
When the model is unknown, its parameters are usually estimated first, and then a model-based approach can be implemented to achieve the optimal policy.
However, the parameter estimation depends on noisy data in practice, and the modeling errors may result in negative consequences.
In control theory, this problem is known as robust control.
Robust control methods consider the uncertain parameters within some compact sets, and optimize the expected return with the worst-case parameters, in order to achieve good robust performance and stability~\cite{nilim2005robust}.

The other risk concerns the \textit{inherent} (or \textit{internal}) uncertainty, which results from the stochastic nature of the process.
The inherent risk can be quantified by a dynamic measure~\cite{ruszczynski2010risk} or a law-invariant measure~\cite{kusuoka2001law}. 
The immediate reward at epoch $ t $ is defined by $ R_t $, a dynamic measure can be denoted in general as
\[ 
R_1 + \rho_1(R_2 + \rho_2(R_3 +\rho_3(\cdots))), 
\]
%\[ 
%r(X_1, K_1) + \rho_1(r(X_2, K_2) + \rho_2(\cdots)), 
%\] ({\color{red} notation def.!})
which is sensitive to the order of the immediate rewards.
Dynamic measures are usually assumed to have a set of properties, such as Markov, monotonicity and coherence, which yields a time-consistent risk measure with a nested structure.
In this paper, we focus on law-invariant risk measures.
Given a discount factor $ \gamma $, a law-invariant measure in an infinite horizon is a functional $ \Psi $ on the return $ \Phi = \sum_{t=1}^{N} \gamma^{t-1} R_t $, where $ N \in \mathbb{N} \cup \{+\infty\} $.
%It is usually defined in terms of the total reward in a finite horizon  or the return in . %, and ignores the difference between immediate rewards received at different epochs. 
Three types of law-invariant risk have been widely studied in RL area. 

\textit{Mean-variance risk}: 
The mean-variance risk is also known in finance as the modern portfolio theory~\cite{D.J.White1988a,doi:10.1287/opre.42.1.175,Mannor2011a}.
The mean-variance analysis aims at optimal return at a given level of risk, or the optimal risk at a given level of return.
In RL, several mean-variance models have been studied.
The variance and the standard deviation of the return are denoted by $ \mathbb{V}(\Phi) $ and $ \sigma (\Phi) $, respectively.
One model could be
\begin{equation} \label{eq:meanVar}
\Psi_V(\Phi) = \mathbb{E}(\Phi) - k \sigma (\Phi),
\end{equation} 
where $ k $ is a risk parameter, and when $ k > 0 $, it is a risk-averse objective.
This is the first mean-variance model for exploring inventory management related problems~\cite{lau1980newsboy}. 
The other model can be maximizing the expected return with a variance constraint, or minimizing the variance with an expected return constraint~\cite{choi2008mean}.
For a review on mean-variance risk, see~\cite{chiu2016supply}.

\textit{Utility risk}: 
Another type of risk measure can be utility risks. %, in which the exponential utility function is concerned.
The original goal of a utility function is to represent the subjective preference~\cite{howard1972risk}. 
One classic example can be the ``St. Petersburg Paradox,'' which refers to a lottery with an infinite expected reward, but no one would put up an arbitrary high stake to play it, since the probability of obtaining a high enough reward is too small. %people only prefer to pay a small amount to play. 
%This problem is thoroughly studied in utility theory, and a recent study brought this idea to RL~\cite{Prashanth2015}.
Mathematically, a utility function $ U: \mathbb{R} \rightarrow \mathbb{R} $ is a mapping from objective value space for all possible outcomes to subjective value space. 
A utility objective is usually in the form $ U^{-1}\{\mathbb{E}[U(\Phi)]\} $, %consists of functionals of
where $ U $ is a strictly increasing function.
The most common used utility risk in RL is the exponential utility~\cite{chung1987discounted}
\[
\Psi_U(\Phi) = \beta^{-1}\log\{\mathbb{E}[\exp(\beta \Phi)]\},
\]
where $ \beta $ models a constant risk sensitivity that risk-averse when $ \beta < 0$.
This can be seen more clearly with the Taylor expansion of the utility
%\[
%\beta^{-1}\log\{\mathbb{E}[\exp(\beta \Phi)]\} = \mathbb{E}(\Phi) + \frac{\beta}{2}\mathbb{V}(\Phi) + \mathcal{O}(\beta^2).
%\]
\begin{equation} \label{eq:utility}
\beta^{-1}\log\{\mathbb{E}[\exp(\beta \Phi)]\} = \mathbb{E}(\Phi) + \frac{\beta}{2}\mathbb{V}(\Phi) + \mathcal{O}(\beta^2).
\end{equation}
\textit{Quantile-based risk}:
The last type of risk measure used in practice refers to quantiles, which requires us to pay attention to discontinuities and intervals of quantile numbers.
A commonly used quantile-based risk measure is value at risk (VaR). 
For VaR estimation with the SAT, see~\cite{shuai2019satAAAI}.
In short, we claim that most, if not all, inherent risk measures depend on the reward sequence $ (R_t:t \in \{ 1, \cdots, N \}) $, %, i.e., a mapping from a sequence space of random variables to the real line.denoting an inherent risk measure by $ \rho: \mathbb{R}^N \rightarrow \mathbb{R} \cup \{ +\infty \} $,
which can be preserved by the SAT in a risk-sensitive scenario.
For law-invariant risk estimations, since VaR estimation in a Markov process with a stochastic reward has been throughly studied in~\cite{shuai2019satAAAI}, in this paper, we focus on the first two risk estimations with Eq.~\cref{eq:meanVar,eq:utility}, respectively.
%The three law-invariant risk measures can be either calculated (mean-variance risk), estimated (utility risk), or estimated with assumption (quantile-based risk) with $ \mathbb{E} (\Phi) $ and $ \mathbb{V}(\Phi) $.
In a MDP with a randomized policy, the return variance $ \mathbb{V}(\Phi) $ is required by both of the two risk estimations.
With the aid of the SAT, $ \mathbb{V}(\Phi) $ can be calculated in a Markov process with a stochastic reward, which allows risk evaluation in practical RL problems.

\section{STATE-AUGMENTATION TRANSFORMATION}

In this section, firstly, we present notations of MDPs with a deterministic and stochastic reward. %, which are concerned in the next section. 
Secondly, we restate the state-augmentation transformation (SAT) as an MDP homomorphism. %, which depend on the return distribution set from all policies. 
Thirdly, we have a discussion on the pros and cons of the SAT in different cases.

\subsection{Markov Decision Processes}

In this paper, we focus on infinite-horizon discrete-time MDPs with finite states and actions. 
An MDP with a deterministic reward can be represented by
\[
\mathcal{M}^{\dagger} = \langle S, A, r, p, \mu , \gamma \rangle,
\]
in which
$S $ is a finite state space, and $X_t \in S$ represents the state at (decision) epoch $t \in \mathbb{N}$;
$A_x$ is the allowable action set for $x \in S$, $A = \bigcup_{x \in S}A_x$ is a finite action space, and $K_t \in A$ represents the action at epoch $t$; 
$r$ is a bounded reward function, and $R_t$ denotes the immediate reward at epoch $t$;
$p(y \mid x, a) = \mathbb{P}(X_{t+1}=y \mid X_t = x, K_t = a)$ denotes the homogeneous transition probability;
$\mu$ is the initial state distribution; $ \gamma \in (0,1)$ is the discount factor.
Since a number of risk-sensitive RL studies require the reward function to be deterministic and state-based~\cite{shuai2019satAAAI}, we only consider $ r: S \rightarrow \mathbb{R} $ in a transformed MDP (or Markov process).

Similarly, an MDP with a stochastic reward can be represented by
\[
\mathcal{M} = \langle S, A, J, p, d, \mu , \gamma \rangle,
\]
in which
$J$ is a finite subset of $ \mathbb{R} $, and is the set of possible values of the immediate rewards;
$ d $ is the reward distribution, with $d(j \mid x, a, y) = \mathbb{P}(R_t=j \mid X_t = x, K_t = a, X_{t+1}=y)$ the probability that the immediate reward at time $ t $ is $ j $, given current state $ x $, action $ a $, and next state $ y $.

A policy $\pi$ refers to a sequence of decision rules ($\pi_0, \pi_1, \cdots,\pi_{N}$) for $ N \in \mathbb{N} \cup \{+\infty\} $, which describes how to choose actions sequentially. 
A randomized decision rule is $ \pi_i: S \rightarrow \mathbb{D}(A) $, i.e., for a given state, it outputs a distribution on the action space. 
Randomized policy is often considered in constrained MDPs~\cite{altman1999constrained} or MDPs with a risk objective. %\cite{Defourny2008a}
%Different types of decision rule are used in different situations, and 
For an MDP with an infinite time horizon, we consider a stationary policy, which can be considered as one decision rule, i.e., $ \pi: S \rightarrow \mathbb{D}(A) $.
%In an infinite horizon case, a stationary policy space is concerned.
%Given an MDP with a randomized policy, the reward function is often naively simplified as well. 
A Markov reward process is tantamount to an MDP with a randomized policy.
A Markov process with a deterministic reward function can be represented by
\[
\mathcal{M}^{\dagger}_{\pi} = \langle S, r_{\pi}, p_{\pi}, \mu , \gamma \rangle.
\]
In this study, we consider an MDP with a randomized policy.
The function of SAT is to transform $ \mathcal{M}^{\dagger}  $ to $ \mathcal{M} $ and preserve $ (R_t) $, and this problem is concerned in four cases~\cite{shuai2019satAAAI}. %
To illustrate how the SAT with state lumping works, we consider the Case 3 as an example, which refers to an MDP with a randomized policy.

\subsection{Homomorphism Version of the Transformation}
\label{secHomo}
Here, we restate the SAT as an MDP homomorphism, which is more general from a mathematical perspective.
In many practical problems, the rewards of the Markov processes are stochastic, but many methods may require the reward in a simple form.
To enable the method in a risk-sensitive case, we may use the SAT to transform the Markov process and preserve the reward sequence $ (R_t:t \in \{ 1, \cdots, N \}) $.
In a sense, the SAT generalizes a number of theoretical studies which are originally designed to work for MDPs or Markov processes with deterministic reward only. 
In this paper, the example is that the variance formula is for Markov processes with deterministic reward functions only.
%For Markov processes with (discrete) stochastic reward functions, we can transform it into ones with deterministic reward functions and preserve $ (R_t) $.
We restate the SAT as an MDP homomorphism.
Comparing with the original SAT theorem~\cite{shuai2019satAAAI}, the homomorphism version of SAT works on a more abstract level.
An MDP homomorphism is a formalism that captures an intuitive notion of specific equivalence between MDPs~\cite{ravindran2002model}. %
In order to convert an MDP $ \mathcal{M} $ with $ d $ to $ \mathcal{M}^{\dagger} $ with $ r $ and preserve $ (R_t) $, we consider each ``situation'', which determines immediate reward, as an augmented state.
We can then attach each possible reward value to an augmented state in $ \mathcal{M}^{\dagger} $.
Formally, we define the SAT homomorphism as follows.

\begin{definition} [SAT, a homomorphism version]
	
	The SAT for MDPs is a homomorphism $ h $ from an MDP $ \mathcal{M} = \langle S, A, J, p, d, \mu \rangle $ to an MDP $ \mathcal{M}^{\dagger} = \langle S^{\dagger}, A, r, p^{\dagger}, \mu^{\dagger} \rangle $. 
	The state space $ S^{\dagger} = S^{\ddagger} \cup S_n $, where 
	$S^{\ddagger} = S^2 \times A \times J$, 
	$ S_n = \{s_{null,x}\}_{x \in S} $, %$ S_n = \{s_{null,1}, \cdots, s_{null,|S|}\} $, 
	and $ S_n \cap S^{\ddagger} = \emptyset $.
	%\[S^{\dagger} = S^2 \times A \times J . \]
	For $ x^{\dagger} = (x,a,y,i), y^{\dagger} = (y,a_y,z,j) \in S^{\ddagger} $, we have $ A_{x^{\dagger}} = A_y $, $ r(x^{\dagger}) = i $, and for $ a_y \in A_y $, $ p^{\dagger}(y^{\dagger} \mid (x^{\dagger}, a_y) = p^{\dagger}(y^{\dagger} \mid (s_{null,y}, a_y) = p(z \mid y,a_y) d(j \mid y,a_y,z) $; 
	for $ x^{\dagger} = s_{null,x} \in S_n $, we have $ A_{x^{\dagger}} = A_x $, $ r(x^{\dagger}) = 0 $, and $ \mu^{\dagger}(s_{null,x}) = \mu(x) $.
	
\end{definition}
We call $ \mathcal{M}^{\dagger} $ the homomorphic image of $ \mathcal{M} $ under $ h $.
For any policy $ \pi $ in $ \mathcal{M} $, there exists a policy $ \pi^{\dagger} $ in $ \mathcal{M}^{\dagger} $, such that the two processes share the same $ (R_t) $.
We define the mapping between the two policy spaces as a policy lift.

\begin{definition} [Policy lift]
	Let $ \mathcal{M}^{\dagger} $ be a homomorphic image of $ \mathcal{M} $ under $ h $.
	Let $ \pi $ be a stochastic policy in $ \mathcal{M} $.
	Then $ \pi $ lifted to $ \mathcal{M}^{\dagger} $ is the policy $ \pi^{\dagger} $ such that $ \pi^{\dagger}(a \mid x^{\dagger}) = \pi(a \mid y)$ for $ x^{\dagger} = (x,a,y,i) \in S^{\ddagger}$ and $ \pi^{\dagger}(a \mid x^{\dagger}) = \pi(a \mid x)$ for $ x^{\dagger} = s_{null,x} \in S_n$.
\end{definition}

Given an MDP with a policy, the randomness of the induced Markov reward process can be studied in its underlying probability space. %a sample path (trajectory)

\begin{definition} [Underlying probability space]
	Let $ (\Omega, \mathcal{F}, \mathcal{P}) $ be a probability space, and $ (E, \mathcal{B}) $ a measurable space with $ E = S \times J $.
	An induced Markov reward process can be represented by an $ (E, \mathcal{B}) $-valued stochastic process on $ (\Omega, \mathcal{F}, \mathcal{P}) $ with  a family $ (Y_t)_{t \in \mathbb{N}} $ of random variables $ Y_t: (\Omega, \mathcal{F}) \rightarrow (E, \mathcal{B}) $ for $ t \in \mathbb{N} $.
	$ (\Omega, \mathcal{F}, \mathcal{P}) $ is called the underlying probability space of the process $ (Y_t)_{t \in \mathbb{N}} $. %, while $ (E, \mathcal{B}) $ is a measurable state space.
	For all $ \omega \in \Omega $, the mapping $ Y(\cdot, \omega): t \in \mathbb{N} \rightarrow Y_t(\omega) \in E $ is called the trajectory of the process with respect to $ \omega $.
	The process $ (Y_t)_{t \in \mathbb{N}} $ is progressively measurable with respect to the filtration $ (\mathcal{F}_t)_{t \in \mathbb{N}} $. %, which is an increasing family of subalgebras of $ \mathcal{F} $.
\end{definition}

A homomorphism version of the SAT theorem is as follows, which claims that the probability measure on trajectories is preserved under $ h $.
Therefore, as a subsequence of sample path, the probability measure on $ (R_t) $ is preserved as well.
%A homomorphism version of the SAT theorem is as follows, which claims that, as a subsequence of trajectory, the probability measure on reward sequence $ (R_t)_{t \in \mathbb{N}} $ is preserved under $ h $.

\begin{theorem} [Probability measure preservation]
	\label{satUsed}
	Let $ \mathcal{M}^{\dagger} $ be an image of $ \mathcal{M} $ under homomorphism $ h $.
	Let $ \pi^{\dagger} $ be the stochastic policy lifted from $ \pi $.
	For the two processes $ \mathcal{M} $ with $ \pi $ and $ \mathcal{M}^{\dagger} $ with $ \pi^{\dagger} $, there exists a bijection $ f_{\Omega}: \Omega \rightarrow \Omega^{\dagger} $, such that
	for the underlying probability space $ (\Omega, \mathcal{F}, \mathcal{P}) $ for the first process, we have a sample path probability space $ (\Omega^{\dagger}, \{ f_{\Omega}(b): b \in \mathcal{F} \}, P^{\dagger}) $ for the second process, such that 
	for any $ t \in \mathbb{N} $, $ \mathcal{P}^{\dagger}(\{ f_{\Omega}(b): b \in \mathcal{F}_t \}) = \mathcal{P}(\{\mathcal{F}_t\})$.
	%$ P^{\dagger}(\{ f_{\Omega}(k): k \in b \}) = P(b) $.
\end{theorem}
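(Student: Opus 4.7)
My plan is to realize both underlying probability spaces as canonical trajectory spaces, define $f_\Omega$ by an explicit coordinate relabeling dictated by the homomorphism $h$ and the policy lift, and then match the two probability measures on the cylinder sets that generate each $\mathcal{F}_t$.

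Concretely, I would take $\Omega = S \times (A \times S \times J)^{\mathbb{N}}$ as the canonical path space for $(\mathcal{M},\pi)$, carrying the coordinate process $(X_0, K_0, X_1, R_1, K_1, X_2, R_2, \ldots)$, and $\Omega^\dagger = S^\dagger \times (A \times S^\dagger)^{\mathbb{N}}$ as the canonical path space for $(\mathcal{M}^\dagger,\pi^\dagger)$. The bijection is
\[
f_\Omega\colon (x_0, a_0, x_1, r_1, a_1, x_2, r_2, \ldots) \longmapsto \bigl(s_{null,x_0},\, a_0,\, (x_0,a_0,x_1,r_1),\, a_1,\, (x_1,a_1,x_2,r_2),\, \ldots \bigr).
\]
The inverse reads the triple $(x_{s-1}, a_{s-1}, x_s)$ and the reward $r_s$ off the $s$-th augmented state for $s \geq 1$ and recovers $x_0$ from the null initial coordinate. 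Both maps are coordinate-wise and hence measurable, so $f_\Omega$ is a measurable bijection that carries the canonical filtration $(\mathcal{F}_t)$ on $\Omega$ to the filtration generated by the augmented state and action coordinates up to time $t$ on $\Omega^\dagger$.

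The core computation is on a generating cylinder. A cylinder specifying $X_0 = x_0, K_0 = a_0, X_1 = x_1, R_1 = r_1, \ldots, K_{t-1} = a_{t-1}, X_t = x_t, R_t = r_t$ has $\mathcal{P}$-measure
\[
\mu(x_0) \prod_{s=0}^{t-1} \pi(a_s \mid x_s)\, p(x_{s+1} \mid x_s, a_s)\, d(r_{s+1} \mid x_s, a_s, x_{s+1}),
\]
while its $f_\Omega$-image, the cylinder $Y_0^\dagger = s_{null,x_0}$, $K_s^\dagger = a_s$, $Y_{s+1}^\dagger = (x_s, a_s, x_{s+1}, r_{s+1})$ for $0 \leq s \leq t-1$, has $\mathcal{P}^\dagger$-measure
\[
\mu^\dagger(s_{null,x_0}) \prod_{s=0}^{t-1} \pi^\dagger(a_s \mid Y_s^\dagger)\, p^\dagger(Y_{s+1}^\dagger \mid Y_s^\dagger, a_s).
\]
Substituting the SAT identities $\mu^\dagger(s_{null,x_0}) = \mu(x_0)$, the policy lift $\pi^\dagger(a_s \mid Y_s^\dagger) = \pi(a_s \mid x_s)$ (applied separately to $Y_0^\dagger = s_{null,x_0}$ and to $Y_s^\dagger \in S^\ddagger$ for $s \geq 1$, both of whose ``current'' component is $x_s$), and $p^\dagger(Y_{s+1}^\dagger \mid Y_s^\dagger, a_s) = p(x_{s+1} \mid x_s, a_s)\, d(r_{s+1} \mid x_s, a_s, x_{s+1})$ collapses the second product into the first, term by term. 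To close, I would invoke the $\pi$-$\lambda$ theorem: cylinders form a $\pi$-system generating $\mathcal{F}_t$, so agreement on this $\pi$-system yields $\mathcal{P}^\dagger(f_\Omega(b)) = \mathcal{P}(b)$ for every $b \in \mathcal{F}_t$.

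The main obstacle is the bookkeeping at the ``seam'' between $S_n$ and $S^\ddagger$: a $t$-step history in $\mathcal{M}$ is encoded by one null state plus $t$ augmented states in $\mathcal{M}^\dagger$, so one has to verify that $\{f_\Omega(b) : b \in \mathcal{F}_t\}$ really coincides with the canonical $\sigma$-algebra up to the appropriate time on the $\mathcal{M}^\dagger$ side, and handle the initial step (where $\mu^\dagger$ and the null-state rule apply) separately from the generic augmented-state step. Once that index alignment is pinned down, the telescoping identity above finishes the argument.
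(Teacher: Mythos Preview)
The paper does not actually prove this theorem: after stating it, the text offers only a one-sentence gloss (``both Markov reward processes share the same $(R_t)$'') and cites the original SAT paper for the details. So there is no in-paper argument to compare against.

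Your proposal is a correct and natural way to supply the missing proof. Realizing both processes on canonical trajectory spaces, writing $f_\Omega$ as a coordinatewise relabeling, and matching measures on generating cylinders via the SAT identities and the $\pi$--$\lambda$ theorem is exactly the standard route for this kind of measure-preservation claim. One bookkeeping point worth making explicit: as written, $\Omega^\dagger = S^\dagger \times (A \times S^\dagger)^{\mathbb{N}}$ is strictly larger than the image of $f_\Omega$, since arbitrary sequences of augmented states need not be ``consistent'' (the first two coordinates of $Y_{s+1}^\dagger$ must match $(x_s,a_s)$ coming from $Y_s^\dagger$ and $K_s^\dagger$). The theorem statement resolves this by taking $\Omega^\dagger$ together with the pushforward $\sigma$-algebra $\{f_\Omega(b):b\in\mathcal{F}\}$, so in effect $\Omega^\dagger$ \emph{is} the image of $f_\Omega$ and the bijection claim is immediate; you should say this in one line rather than leave it implicit. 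With that clarification, the seam issue you flag (null state at time $0$, augmented states thereafter) is purely notational and your cylinder computation goes through.
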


The SAT theorem claims that, for an $\mathcal{M}$ with a stochastic transition-based reward, there exists an $\mathcal{M}^{\dagger}$ with a deterministic state-based reward, such that for any given $ \pi $ for $\mathcal{M}$, there exists a corresponding $ \pi^{\dagger} $ for $\mathcal{M}^{\dagger}$, such that both Markov reward processes share the same $ (R_t) $.  %(with a null state in the transformed one).

%\color{blue}
\textit{Pros and cons of the SAT}: There are mainly two benefits brought by the SAT.
Firstly, it converts an $ \mathcal{M} $ (or $ \mathcal{M}_{\pi} $), which has a discrete stochastic reward distributed on a finite support, into an $ \mathcal{M}^{\dagger} $ (or $ \mathcal{M}^{\dagger}_{\pi} $) with a deterministic state-based reward function.
The essence of the SAT is a surjective mapping from the situation space~\cite{shuai2019satAAAI} to the reward space $ J $.
This mapping not only extends a number of methods provided for $ \mathcal{M}^{\dagger} $ to work for $ \mathcal{M} $, but also allows us to analyze the immediate rewards as states with methods such as z-transform~\cite{howard1964dynamic}.
Secondly, action is ``removed'' from the reward function and becomes a component of the augmented state.
This renders action to impact on transitions only.
There are two disadvantages as well.
One is the size of the augmented state space is now ($ |S|^2|A||J| $), which is much larger than $ |S| $. 
The other is that the SAT removes the recurrence property from a recurrent MDP, which may result in a prohibition of some methods.
In the next section, we apply the SAT to an MDP with a randomized policy, estimate the two risks, and compare the results with the estimations from reward simplification.

%\color{black}
\section{RISK ESTIMATION VIA TRANSFORMATION}
In this section, we use a toy example to illustrate the SAT with a state lumping aiming at shrinking the augmented state space.
Furthermore, we show the risk estimations and errors from reward simplification.
Firstly, we describe the MDP with two states and a randomized policy, and propose a novel definition of isotopic class, in which all contained states can be regarded as one state in return study. 
%The state merging aims at shrinking the state space.
Secondly, we estimate the two risks with the return variance calculation method, and show how the reward simplification change the risk estimations.
%we present the variance formula for Markov processes with deterministic reward functions.
%Thirdly, 

%\color{blue}
\begin{figure}[t] %[b] %[height=3in,width=5.2in]
	\centering
	\includegraphics[width=0.45\textwidth]{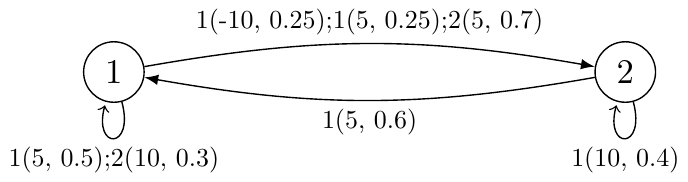} %MC_desc_v3_color
	\caption{A toy example with two states and two actions. The labels $ a(j,q) $ along transitions represent the action $ a $, the immediate reward $ j $, and the probability $ q = p(y \mid x, a) d(j \mid x, a, y) $.} %, where $ x,y $ represent the current and the next states, respectively.} %denote $a(j,p(j | x,a,y))$, where
	\label{MdpDesc}
\end{figure}

\subsection{Isotopic State lumping in a Toy Example} 

Consider the process illustrated in Fig.~\ref{MdpDesc} with two states and two actions, in which state 1 has two actions and state 1 has one. 
The process starts from state 1 at time $ t = 1 $.  
Let's consider the randomized policy $ \pi(1) = [0.5, 0.5] $---uniformly choose an action for state 1---then we have a Markov reward process illustrated in Fig. 2(a). %.~(\ref{fig:mrp})
In order to calculate the return variance, the Markov process needs to have a deterministic reward function.
One way to achieve this is to naively simplify the reward function by calculating the expectation conditioned on state.
In the MDP with $ \pi $, the reward simplification refers to
\begin{equation} \label{eq:Rsim}
r'(x) = \sum\limits_{a \in A_x, y \in S, j \in J} j \cdot \pi(a \mid x) p(y \mid x,a) d(j \mid x,a,y),  
\end{equation}
for $ x \in S $.
The induced Markov process is shown in Fig. 2(b).

The other way to acquire a deterministic reward function is through SAT.
A suitable SAT renders a deterministic reward function and preserve $  (R_t)  $.
However, SAT also enlarges the state space from $ |S| $ to ($ |S|^2 |A| |J| $) in general.
Considering the special structure of $ p_{\pi}^{\dagger} $, we propose a definition of isotopic states as an alleviation of the enlarged state space.

\begin{definition} [Isotopic states]
	\label{isoState}
In a Markov process $ \langle S, r_{\pi}, p_{\pi}, \mu , \gamma \rangle  $, if there exist two states $ x_i,x_j \in S$, such that,
\begin{itemize}
\item Condition 1: $ r_{\pi}(x_i) = r_{\pi}(x_j) $; and 
\item Condition 2: $ p_{\pi}(y \mid x_i) = p_{\pi}(y \mid x_j) $ for $ y \in S\setminus \{x_i,x_j\} $, 
\end{itemize}
then we say $ x_i$ and $x_j$ are isotopic. 
\end{definition}
With the definition of isotopic states, we propose a theorem on reward preservation in state lumping as follows.
\begin{figure}[!b]
	\centering	
	%	\subfloat[A Markov process with a stochastic transition-based reward function.]{%
	%		\includegraphics[width=\columnwidth]{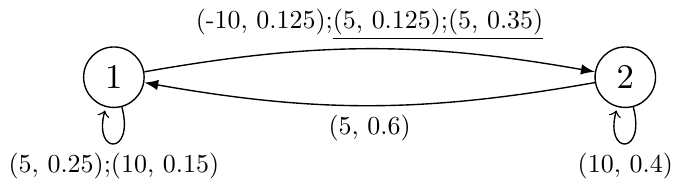}%clip,
	%	}\label{mrp} \\
	\begin{subfigure}{\columnwidth} %
		\includegraphics[width=\columnwidth]{Mrp_desc_v2}
		\caption{A Markov process with a stochastic transition-based reward function. The labels $ (j,q) $ along transitions represent the immediate reward $ j $ and the probability $ q = p_{\pi}(y \mid x) d_{\pi}(j \mid x, y) $, where $ x,y $ represent the current and the next states, respectively. The two underlined situations are isotopic states, which can be lumped into one augmented state in the transformed Markov process.}\label{fig:mrp}
	\end{subfigure}
	\vspace{10pt}
	\\	
	%	\subfloat[The Markov process from reward simplification.]{%
	%		\includegraphics[width=0.80\columnwidth]{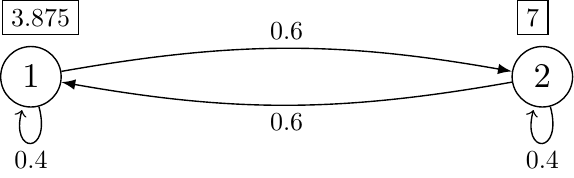}%clip,
	%	}
	%	\label{mc_Rsim}
	\begin{subfigure}{\columnwidth} %
		\centering
		\includegraphics[width=0.85\columnwidth]{Mrp_rSim_v2}
		\caption{The Markov process from reward simplification. The labels $ q $ along transitions represent the probability $ q = p_{\pi}(y \mid x) $, and the labels $ r(x) $ in the text boxes represent the deterministic state-based reward values from reward simplification.}\label{fig:mc_Rsim}
	\end{subfigure}
	\caption{The reward simplification on the Markov process with a stochastic transition-based reward function.}
	
\end{figure}

\begin{theorem} [Reward preservation in state lumping]
	\label{isotopic}
	If the two states $ x_i,x_j \in S$ are isotopic in a Markov process $ \mathcal{M} = \langle S, r_{\pi}, p_{\pi}, \mu , \gamma \rangle  $, then there exists a Markov process $ \mathcal{M}' = \langle S', r'_{\pi}, p'_{\pi}, \mu' , \gamma \rangle  $, in which $ S' = S \setminus \{x_j\} $, and for $ x,y \in S' \setminus \{x_i\} $ and $ z \in S' $, $ r'_{\pi}(z) = r_{\pi}(z) $, $ p'_{\pi}(x_i \mid y) = p_{\pi}(x_i \mid y) + p_{\pi}(x_j \mid y) $, $ p'_{\pi}(x \mid y) = p_{\pi}(x \mid y) $, $ p'_{\pi}(x \mid x_i) = p_{\pi}(x \mid x_i) $, $ p'_{\pi}(x_i \mid x_i) = p_{\pi}(x_i \mid x_i) + p_{\pi}(x_j \mid x_i) $, and $ \mu'(y) = \mu(y) $, $ \mu'(x_i) = \mu(x_i) + \mu(x_j) $,
	such that the two Markov process $ \mathcal{M} $ and $ \mathcal{M}' $ share the same $ (R_t:t \in \{ 1, \cdots, N \}) $.
\end{theorem}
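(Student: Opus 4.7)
The plan is to treat this as an instance of \emph{state lumpability} for Markov chains. Define a projection $\phi: S \to S'$ by $\phi(x_j) = x_i$ and $\phi(x) = x$ for $x \in S'$, let $(X_t)$ be the chain in $\mathcal{M}$, and set $X'_t := \phi(X_t)$. The strategy is to show that $(X'_t)$ is a time-homogeneous Markov chain on $S'$ whose transition kernel and initial law coincide with the $p'_{\pi}$ and $\mu'$ written in the statement. Because $r_{\pi}(x_i) = r_{\pi}(x_j)$ by Condition~1 and $r'_{\pi}$ agrees with $r_{\pi}$ elsewhere, we automatically get $r'_{\pi}(X'_t) = r_{\pi}(X_t)$ pointwise, so equality in distribution of $(X'_t)$ with the canonical $\mathcal{M}'$-chain will yield equality in distribution of the reward sequences $(R_t : t \in \{1,\ldots,N\})$.

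I would carry this out in three steps. \textbf{Step 1} (initial distribution): compute $\mathbb{P}(X'_1 = z) = \sum_{x \in \phi^{-1}(z)} \mu(x)$, which collapses to $\mu(z)$ for $z \in S' \setminus \{x_i\}$ and to $\mu(x_i) + \mu(x_j)$ for $z = x_i$, matching $\mu'$. \textbf{Step 2} (lumpability, i.e.\ the Markov property of $(X'_t)$): verify that for every $z' \in S'$, the quantity $\mathbb{P}(\phi(X_{t+1}) = z' \mid X_t = x)$ depends only on $\phi(x)$. The only nontrivial case is $x \in \{x_i,x_j\}$, where it suffices to check $p_{\pi}(\phi^{-1}(z') \mid x_i) = p_{\pi}(\phi^{-1}(z') \mid x_j)$; for $z' \in S' \setminus \{x_i\}$ this is immediate from Condition~2. \textbf{Step 3} (transition entries): read off $p'_{\pi}(z' \mid \phi(x)) = p_{\pi}(\phi^{-1}(z') \mid x)$ and match each of the four cases listed in the theorem, using Condition~2 to justify that $p'_{\pi}(x \mid x_i) = p_{\pi}(x \mid x_i)$ is well-defined independently of which representative of the lumped class we pick.

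The main obstacle sits inside Step~2 at $z' = x_i$: one needs $p_{\pi}(x_i \mid x_i) + p_{\pi}(x_j \mid x_i) = p_{\pi}(x_i \mid x_j) + p_{\pi}(x_j \mid x_j)$, and Condition~2 is silent about self-transitions within $\{x_i, x_j\}$. I will close this gap by a total-probability argument: each side equals $1 - \sum_{y \in S \setminus \{x_i,x_j\}} p_{\pi}(y \mid \cdot)$ for $\cdot = x_i$ and $\cdot = x_j$ respectively, and Condition~2 forces these residual sums to agree term by term. Once this is in hand, lumpability is secured and the rest is bookkeeping: the distribution of $(X'_t)$ on $S'^{\{1,\ldots,N\}}$ coincides with that of the canonical chain in $\mathcal{M}'$, and composing with $r'_{\pi}$ transfers this to equality in distribution of the reward sequences, concluding the proof.
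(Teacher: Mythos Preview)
Your proposal is correct and follows essentially the same lumping argument as the paper: project $x_j$ onto $x_i$, verify that the projected process is Markov with the prescribed transitions and initial law, and conclude via Condition~1 that the reward sequence is preserved. Your treatment is in fact more careful than the paper's own proof, which is brief and informal and never explicitly addresses the $z' = x_i$ case in Step~2; your total-probability argument closing that gap is exactly what is needed, and the paper implicitly relies on it without stating it.
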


\begin{proof}
%	For any possible sample path $ (x_1, j_1, x_2, j_2, x_3 \cdots) $ for $ \mathcal{M}$, there exists a sample path $ (x_1, j_1, x_2, j_2, x_3 \cdots) $ for $ \mathcal{M}$ by simply replace $ x_j $ with $ x_i $. Since $ r_{\pi}(x_i) = r_{\pi}(x_j) $, the reward followed the replaced state is intact.
	Denote the reward sequence for $ \mathcal{M}' $ by $ (R'_t:t \in \{ 1, \cdots, N \}) $.
	Since for $ x \in S'\setminus \{x_i\}$, $ \mu'(x) = \mu(x) $, $ \mu'(x_i) = \mu(x_i) + \mu(x_j) $, we have $ R_1 $ and $ R'_1 $ share the same distribution.
	Since the outcome $ x_j $ is replaced by $ x_i $ in $ \mathcal{M}' $, the two events $ X_t \in \{x_i, x_j\} $ and $X'_t = x_i $ share the same probability conditioned on $ X_{t-1} $; 
	and since $ p(y \mid x_i) = p(y \mid x_j) $ for $ y \in S\setminus \{x_i,x_j\} $, the replacement does not change the probability of $X'_{t+1} = x \in S' $ conditioned on $ X_{t} $.
	Then, for $ t = 2, \cdots, N $, $ R_t $ and $ R'_t $ share the same probability conditioned on $ X_{t-1} $.
	Therefore, the two Markov process $ \mathcal{M} $ and $ \mathcal{M}' $ share the same $ (R_t:t \in \{ 1, \cdots, N \}) $.
\end{proof}

The theorem above claims that, the isotopic state lumping will not change the reward sequence. Therefore, most, if not all, risk measures will be preserved by an induced Markov process with a smaller state space from the isotopic state lumping. 
Furthermore, we call a set of states isotopic class if any two members in the set are isotopic.
It is easy to generalize the Theorem~\ref{isotopic} to an isotopic class.
\begin{corollary} [Reward preservation in class lumping]
%	\label{isotopic}
	In a transformed Markov process $ \langle S, r_{\pi}, p_{\pi}, \mu_{\pi} , \gamma \rangle  $, if there exist two states $ x_i,x_j \in S$, which satisfies two conditions: i).  $ r_{\pi}(x_i) = r_{\pi}(x_j) $; and ii). $ p(y \mid x_i) = p(y \mid x_j) $ for $ y \in S\setminus \{x_i,x_j\} $, then we say $ x_i$ and $x_j$ are isotopic.
\end{corollary}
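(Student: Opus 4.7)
The plan is to reduce the class lumping statement to iterated applications of Theorem~\ref{isotopic} on pairs of isotopic states. Given an isotopic class $C = \{x_1, \ldots, x_k\} \subseteq S$ in $\mathcal{M}$, I would proceed by induction on $k$. The base case $k = 2$ is exactly Theorem~\ref{isotopic}. For the inductive step, I would apply Theorem~\ref{isotopic} to the pair $(x_{k-1}, x_k)$ to produce a Markov process $\mathcal{M}'$ on $S' = S \setminus \{x_k\}$ with $(R_t)$ preserved, then invoke the induction hypothesis on the reduced class $C' = \{x_1, \ldots, x_{k-1}\}$ inside $\mathcal{M}'$. Because Theorem~\ref{isotopic} preserves $(R_t)$ at each single step, composing the reductions preserves $(R_t)$ for the full collapse of $C$.

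The key intermediate step is verifying that $C'$ remains an isotopic class in $\mathcal{M}'$, i.e.\ that isotopy survives the pairwise reduction. For any $x_a, x_b \in C'$, the reward condition is immediate because Theorem~\ref{isotopic} gives $r'_\pi(z) = r_\pi(z)$ on $S'$. For the transition condition I would split into cases. If neither $x_a$ nor $x_b$ is the representative $x_{k-1}$, then for $y \in S' \setminus \{x_a, x_b, x_{k-1}\}$ transitions are unchanged and the original isotopy applies directly; for $y = x_{k-1}$,
\[
p'_\pi(x_{k-1} \mid x_a) = p_\pi(x_{k-1} \mid x_a) + p_\pi(x_k \mid x_a) = p_\pi(x_{k-1} \mid x_b) + p_\pi(x_k \mid x_b) = p'_\pi(x_{k-1} \mid x_b),
\]
where both summands match by applying the original isotopy of $x_a, x_b$ separately to $x_{k-1}$ and $x_k$, each of which lies outside $\{x_a, x_b\}$. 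If instead $x_a = x_{k-1}$, then for $y \in S' \setminus \{x_{k-1}, x_b\}$ the outgoing transitions from the representative in $\mathcal{M}'$ coincide with $p_\pi(\cdot \mid x_{k-1})$, and $p_\pi(y \mid x_{k-1}) = p_\pi(y \mid x_b)$ by the original isotopy.

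The main obstacle I expect is a well-definedness subtlety: the representative's out-transitions in Theorem~\ref{isotopic} are declared via the $x_{k-1}$-row, so I should argue this is not an arbitrary choice. Since $x_{k-1}$ and $x_k$ are isotopic, their rows agree on $S \setminus \{x_{k-1}, x_k\}$, and by probability conservation the total mass each assigns to $\{x_{k-1}, x_k\}$ is equal; hence the marginal on $S'$ obtained by either choice of row is identical. This consistency is precisely what allows the inductive application to land in a Markov process that is still subject to the hypotheses, so that iterating the pairwise lumping $k-1$ times yields a single representative for $C$ while preserving $(R_t:t \in \{1, \ldots, N\})$ throughout.
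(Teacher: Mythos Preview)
Your approach is correct and aligns with the paper's own treatment: the paper does not actually give a proof of this corollary, remarking only that ``it is easy to generalize the Theorem~\ref{isotopic} to an isotopic class'' (and indeed the printed statement of the corollary appears to be a misprint that merely restates Definition~\ref{isoState}). Your inductive reduction to iterated pairwise lumpings is exactly the natural way to make that generalization rigorous, and your verification that isotopy survives each reduction step supplies the only nontrivial detail the paper leaves implicit.
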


\begin{figure}[!b] %[b] %[height=3in,width=5.2in]
	\centering
	\includegraphics[width=\columnwidth]{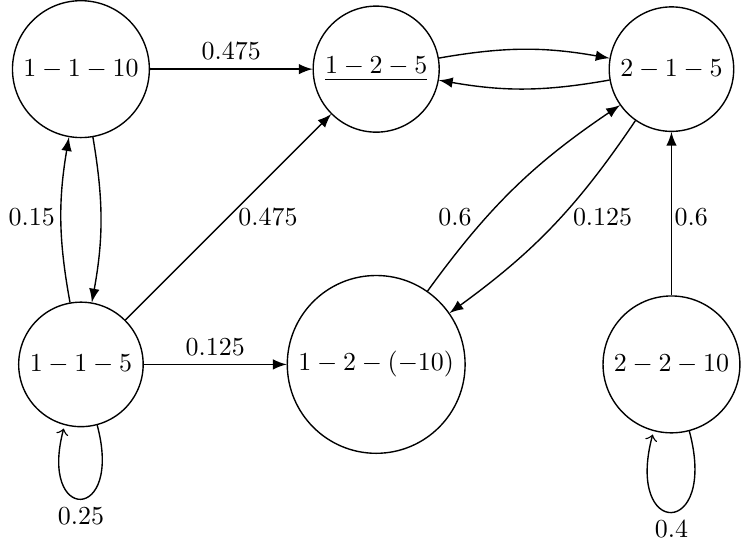} %MC_desc_v3_color
	\caption{The transformed Markov process with a deterministic state-based reward function.
		Some transitions are hidden.
		The underlined state comes from lumping two situations from the original Markov process in Fig. 2(a).} %denote $a(j,p(j | x,a,y))$, where
	\label{Mc_trans}
\end{figure}

%\color{blue}
The results of this section are based in part on the results of~\cite{kemeny1976markov,burke1958markovian}, in which the lumpability in Markov processes was thoroughly studied.
%Comparing with the seminal studies
By comparison, the Definition~\ref{isoState} and the Theorem~\ref{isotopic} are based on single states instead of partitions of state space, i.e., in Condition 2 in Definition~\ref{isoState}, $ x_i, x_j $ and $ y $ are single states instead of partitions.
Besides the fact that it is hard to partition a general state space with equivalent partitioned transition probabilities (Definition 8.4 in~\cite{harrison1992performance}), the main reason refers to a characteristic of the transformed transition probability from the SAT.
Let's consider the transformed Markov process in Fig.~\ref{Mc_trans} with a state space $ S $ as an example.
Since the transition distribution $ p_{\pi}(\cdot \mid x) $ for $ x = (x^{\dagger}, y^{\dagger}, j) \in S$ depends on $ y^{\dagger} $ only, %a sufficient condition for the second condition in Definition~\ref{isoState} is that, the two states share the same second component.Therefore, 
a sufficient condition for two states being isotopic could be that, the two states share the last two components.
This sufficient condition can be stated as follows.
\begin{corollary} [A sufficient condition for isotopic states]
	%	\label{isotopic}
	In a transformed Markov process with a state space $ S $, if there exist two states $ x = (x_1,x_2,i), y = (y_1, y_2,j) \in S$, such that $ x_2 = y_2 $ and $ i = j $, then $ x$ and $y$ are isotopic.
\end{corollary}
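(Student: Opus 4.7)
The plan is to verify both defining conditions of isotopic states (Definition~\ref{isoState}) directly from the SAT construction, exploiting the fact that in the transformed process the reward and the policy-averaged transition kernel both factor through only the last two components of the augmented state. Writing $x = (x_1, x_2, i)$ and $y = (y_1, y_2, j)$ with $x_2 = y_2$ and $i = j$, one then only has to read off two equalities.

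For Condition~1, the SAT sets $r(x^{\dagger})$ equal to the reward slot of the augmented tuple, so $r_{\pi}(x) = i = j = r_{\pi}(y)$ is immediate from $i = j$. For Condition~2, I would invoke the SAT transition formula, which shows that $p^{\dagger}(z^{\dagger} \mid (x^{\dagger}, a))$ depends on the source augmented state only through its ``next underlying state'' component $x_2$ and on the chosen action $a \in A_{x_2}$. Under the lifted policy $\pi^{\dagger}(\cdot \mid x^{\dagger}) = \pi(\cdot \mid x_2)$, the resulting averaging keeps the source-dependence confined to $x_2$. Since $x_2 = y_2$, the two induced distributions $p_{\pi}(\cdot \mid x)$ and $p_{\pi}(\cdot \mid y)$ coincide on all of $S$, in particular on $S \setminus \{x, y\}$, which is what Definition~\ref{isoState} requires.

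The only real friction is notational reconciliation: the SAT definition writes augmented states as four-tuples $(x, a, y, i) \in S^2 \times A \times J$, whereas the corollary and the paragraph preceding it use triples $(x_1, x_2, i)$ (the action component having been implicitly absorbed, as hinted by the remark that $p_{\pi}(\cdot \mid x)$ depends on $y^{\dagger}$ only). I would open the proof with a sentence pinning down this correspondence, so that the ``$x_2$'' of the corollary is identified with the ``next underlying state'' slot of the SAT tuple. Once this mapping is made explicit, the verification of both conditions is a one-line substitution rather than a genuine calculation; the substantive content of the corollary lies entirely in the factorization properties baked into the SAT and in the policy-lift definition.
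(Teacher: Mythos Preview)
Your proposal is correct and matches the paper's own reasoning, which is not given as a formal proof but is contained in the sentence immediately preceding the corollary: the transition distribution $p_{\pi}(\cdot \mid x)$ depends only on the second component $y^{\dagger}$, and the reward depends only on the last component $j$, so sharing the last two components forces both conditions of Definition~\ref{isoState}. Your explicit treatment of the $4$-tuple versus $3$-tuple notational discrepancy is a useful addition that the paper leaves implicit.
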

Now, we can clearly see in Fig.~\ref{Mc_trans} that, the two states $ (1-1-5) $ and $ (2-1-5) $ are isotopic as well, which can be lumped to further simplify the analysis.
%\color{black}
In summary, for states which satisfied the two conditions in Definition~\ref{isoState}, we may regard them as one state and keep $ (R_t)$ intact.
Considering that the transition probability in the transformed Markov process is only sensitive to the start and end original states, the augmented state space can be shrunk under a fair condition. 
The transformed Markov process is illustrated in Fig.~\ref{Mc_trans} with some transitions hidden.
Notice that the underlined state is generated by lumping the two situations shown in Fig. 2(a). 
Though the isotopic class lumping is not specially designated for SAT, the transformed Markov process has a fair chance to relieve from the enlarged state space to some degree.
In the next section, we estimate the two risks on the return with the aid of the SAT.

%All states in an isotopic class can be ``merged'' into one state.

%\color{black}

%\subsection{Variance Formula for Markov Processes}
%\label{varCal}

\begin{figure}[t] %[b] %[height=3in,width=5.2in]
	\centering
	\includegraphics[width=\columnwidth]{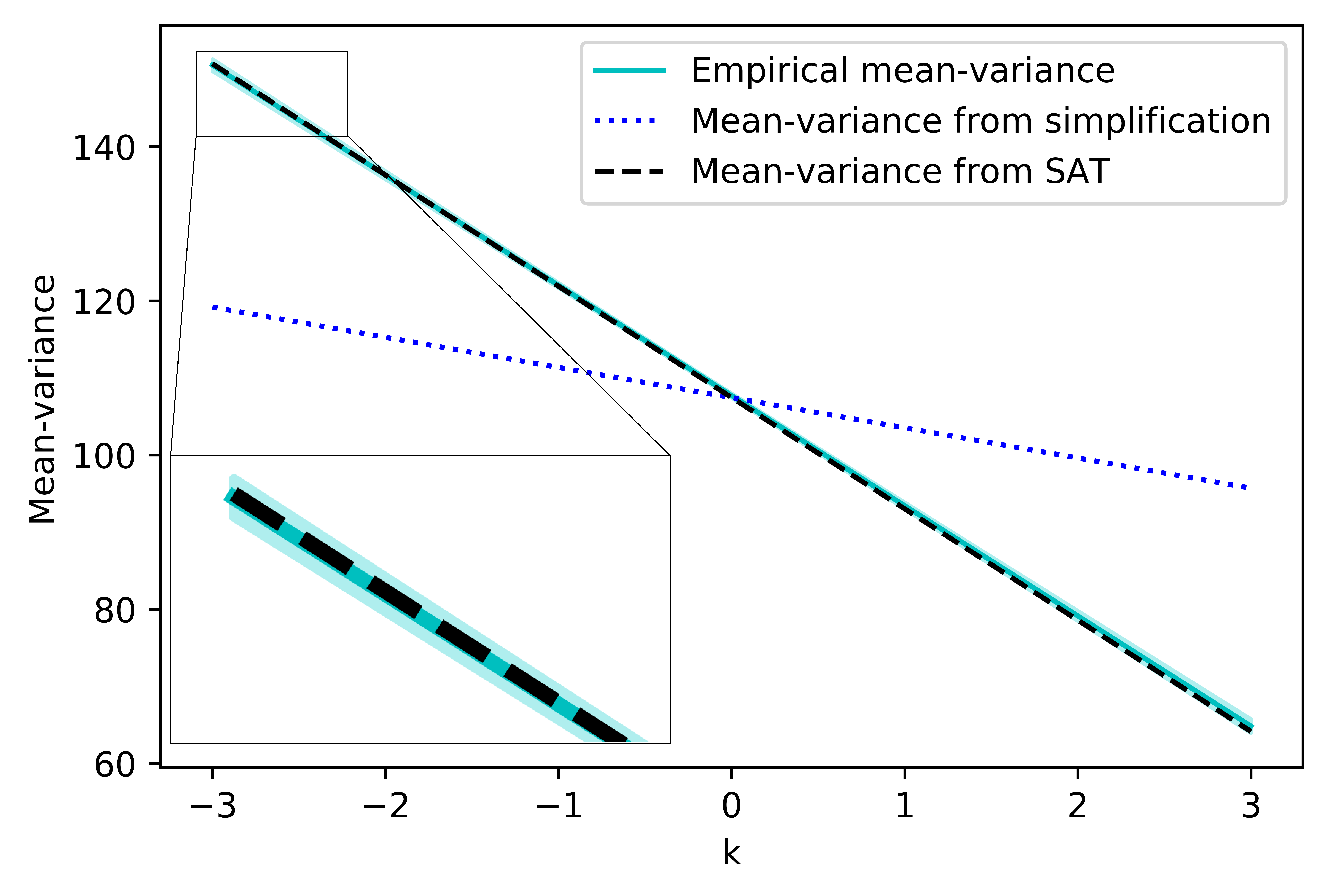} %MC_desc_v3_color
	\caption{The comparison among the empirical mean-variance risk (with a narrow error region zoomed in), the estimated mean-variance risk from reward simplification and the estimated mean-variance risk from SAT along the risk parameter $ k \in (-3,3) $.} %denote $a(j,p(j | x,a,y))$, where
	\label{meanVar}
\end{figure}

%\color{blue}
\subsection{Risk Estimations and Errors from Reward Simplification}
Here, we estimate the mean-variance risk and the exponential utility risk by Eq.~\cref{eq:meanVar,eq:utility}, respectively.
As shown in Section~\ref{secRisk}, the two risks can be estimated with the return variance.
For an infinite-horizon Markov reward process with a deterministic reward function, Sobel presented the formula for the return variance.
See~\cite{sobel1982variance} for further information.
%
%\begin{theorem} %~\cite{sobel1982variance} 
%	Given an infinite-horizon Markov process $\langle S, r_{\pi}, p_{\pi}, \gamma \rangle$ with a finite state space $ S = \{1, \cdots, |S|\} $, a reward function $r_{\pi}$ deterministic state-based and bounded, and a discount factor $ \gamma \in (0,1)$. 
%	Denote the transition matrix by $ P $, in which $ P(x,y) = p_{\pi}(y \mid x), x,y \in S $.
%	Denote the conditional return expectation by $v_x = \mathbb{E}(\Phi  \mid  X_0 = x)$ for any deterministic initial state $x \in S$, and the conditional expectation vector by $v$. Similarly, denote the conditional return variance by $\psi_x = \mathbb{V}(\Phi \mid X_0 = x)$, and the conditional variance vector by $\psi$. 
%	Let $\theta$ denote the vector whose $x$th component is $\theta_x = \sum_{y \in S}p_{\pi}(x,y) (r_{\pi}(x) + \gamma v_y)^2 - v^2_x$. 
%	Then
%	\begin{gather*}
%	v = r_{\pi} + \gamma P v = (I - \gamma P)^{-1} r_{\pi}, \\ 
%	\psi = \theta + \gamma^2 P \psi = (I - \gamma^2 P)^{-1} \theta.
%	\end{gather*}
%	\label{varUsed}
%\end{theorem}
%Now with the aid of Theorem~\ref{theorem1982}, we can estimate the return distribution for the ergodic Markov reward process. 
Notice that the variance formula is for Markov processes with deterministic reward functions only.
Though we have the SAT for Markov process with a discrete stochastic reward, how to apply the method to the ones with a continuous stochastic reward is a concern. %, and naively adopting the expected reward will lead to an incorrect risk value~\cite{Ma2017STT}. 
%In next section, we use the variance formula to estimate two risks for Markov processes from reward simplification and SAT.

For each risk, we compare three estimations: empirical estimation, estimation from reward simplification, and estimation from SAT.
The empirical estimation is calculated as follows. 
We run $ L = 50 $ groups of simulations to calculate the variance of an estimation, in each group we run $ M=1000 $ simulations of the Markov process, and in each simulation we set the time horizon $ N = 1000 $.

\textit{Mean-variance risk estimation}: 
%\subsubsection{Mean-Variance Risk Estimation}
The empirical estimation of mean-variance risk with a risk parameter $ k $ is 
\[
\hat{\Psi}_V (\beta) = \sum_{i=1}^{L}\Psi_{V,i}(\beta) / L,
\]
in which 
\[
\Psi_{V,i}(\beta) = \sum_{t=1}^{M} \phi_{i,t} /M,
\]
where $ \phi_{i,t} $ is an outcome of a simulation in group $ i $, and $ t \in \{1, \cdots, M\} $.
The comparison among the empirical mean-variance risk, the estimated mean-variance risk from reward simplification, and the estimated mean-variance risk from SAT along the risk parameter $ k \in (-3,3) $ is shown in Fig.~\ref{meanVar}.
For different $ k $, the empirical mean-variance risk $ \hat{\Psi}_V (\beta) $ is illustrated with an error region representing the standard deviations of the means.
Since the error region is so narrow that it is hardly seen, we zoom in a piece to make it clear.
Based on the observation, we can see that, the estimated mean-variance risk from SAT is close to the empirical mean-variance risk, but its counterpart from reward simplification is not.
That is because the return variance is preserved by the SAT.
%The estimated mean-variance risk from SAT works much better than the estimated mean-variance risk from reward simplification.
%That is because the variance is preserved by the SAT.
\begin{figure}[t] %[b] %[height=3in,width=5.2in]
	\centering
	\includegraphics[width=\columnwidth]{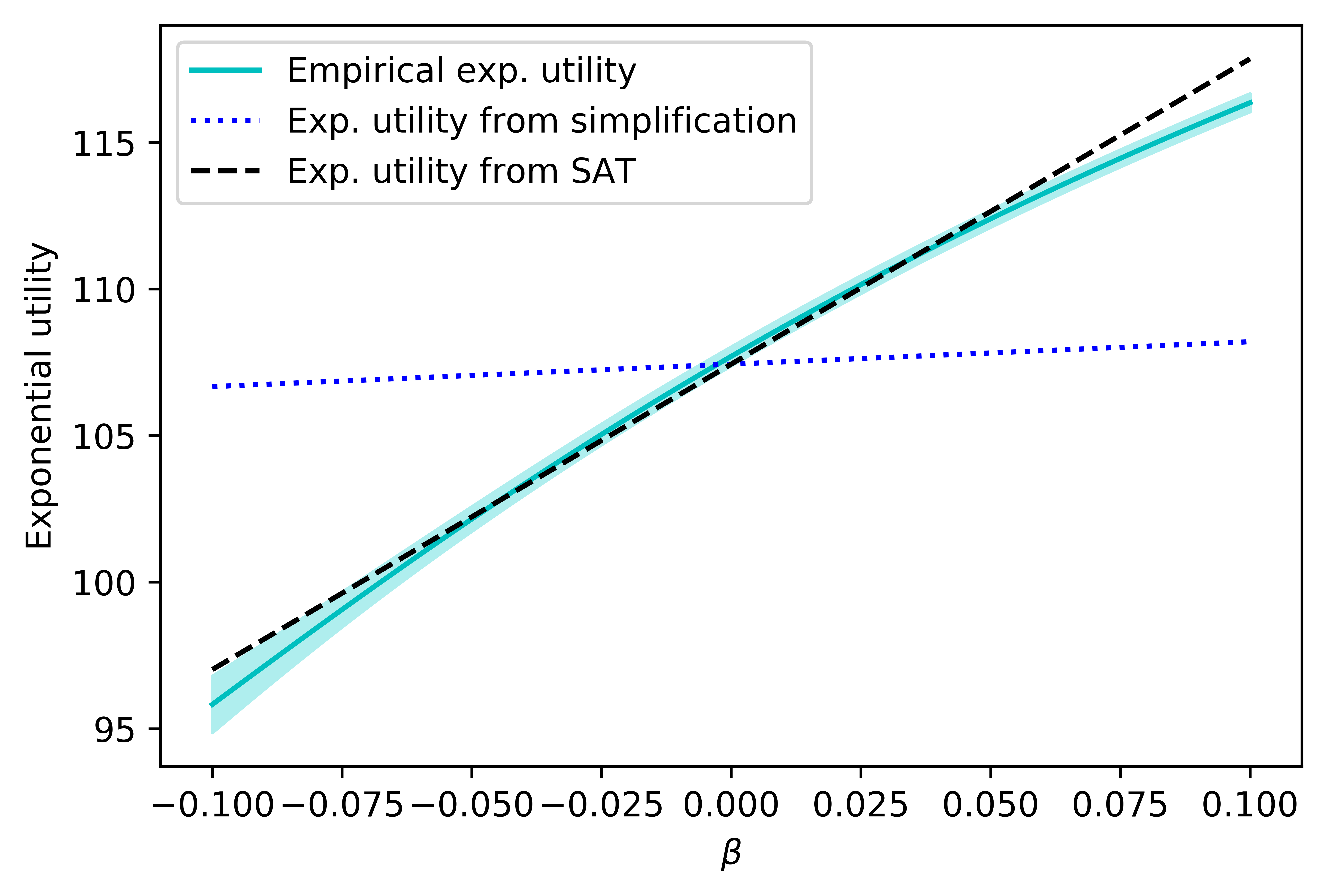} %MC_desc_v3_color
	\caption{The comparison among the empirical exponential utility risk, the estimated exponential utility risk from reward simplification, and the estimated exponential utility risk from SAT along the risk parameter $ \beta \in (-0.1, 0.1) $.} %denote $a(j,p(j | x,a,y))$, where
	\label{utility_1}
\end{figure}
%------------------------------------------------

\textit{Exponential utility risk estimation}: 
%\subsubsection{Exponential utility risk estimation}
The empirical estimation of exponential utility risk with a risk parameter $ \beta $ is calculated by
\[
\hat{\Psi}_U (\beta) = \sum_{i=1}^{L}\Psi_{U,i}(\beta) / L,
\]
in which 
\begin{equation} \label{eq:UtExp}
	\Psi_{U,i}(\beta) = \beta^{-1} \log [\sum_{t=1}^{M} \exp (\beta \phi_{i,t})/M],
\end{equation}
where $ \phi_{i,t} $ is an outcome of a simulation in group $ i $, and $ t \in \{1, \cdots, M\} $.
The comparison among the empirical exponential utility risk, the estimated exponential utility risk from reward simplification, and the estimated exponential utility risk from SAT along the risk parameter $ \beta \in (-0.1, 0.1) $  is shown in Fig.~\ref{utility_1}.
It is worth noting that, the utility value at $ \beta = 0 $ is set by the average of the two adjacent values, since as the denominator in the Eq.~\ref{eq:UtExp}, $ \beta $ cannot be zero.
Based on the observation, we can see that when $ \beta \in (-0.1, 0.1) $, the estimated risk from SAT is close to the empirical one, but its counterpart from reward simplification is not.
That is again because the return variance is preserved by the SAT.
It is also found that, for the case $ \beta \in (-3, 3) $ in Fig.~\ref{utility_3}, the estimated risk from SAT goes far from the empirical one.
That is because there exists a term $ \mathcal{O}(\beta^2) $ in Eq.~\ref{eq:utility}, and when $ \beta $ deviates too much from zero, this term brings a large error.
Therefore, the estimation from SAT works only when $ \beta$ is close to zero, and its counterpart from reward simplification has a large error in both cases.
%For group $ i \in \{1, \cdots, L\} $ of simulations, the return values are rearranged to $ (\phi_{i,j} \mid j \in \{1, \cdots, M\}) $ with $ \phi_{i,1} \le \cdots, \phi_{i,M}$, then the empirical estimator of CVaR with a risk parameter $ \alpha \in (0,1) $ is calculated by
%\[
%\hat{\Psi}_Q (\alpha) = \sum_{i=1}^{L}\Psi_{Q,i}(\alpha) / L,
%\]
%in which
%\[
%\Psi_{Q,i}(\alpha) = \sum_{j=1}^{\ceil{M \cdot \alpha}} \phi_{l,j}/\ceil{M \cdot \alpha},
%\]
%where $ \ceil{x} $ is the ceiling function, which maps $ x $ to the least integer greater than or equal to $ x $.

%\color{blue}
Thus far, we have shown that, the reward simplification changes the two risk estimations.
There are a number of methods for MDPs and Markov processes with rewards which are not stochastic and transition-based, such as~\cite{xia2018mean} on mean-variance risk and~\cite{borkar2002q,shen2014risk} on exponential utility risk. %guo2012mean,        
We believe that, when people apply these methods for practical problems with stochastic and transition-based rewards, they should revisit the methods with the SAT instead of the reward simplification.
This concern refers to the case when randomized policy is involved as well.

%\color{black}
\begin{figure}[t] %[b] %[height=3in,width=5.2in]
	\centering
	\includegraphics[width=\columnwidth]{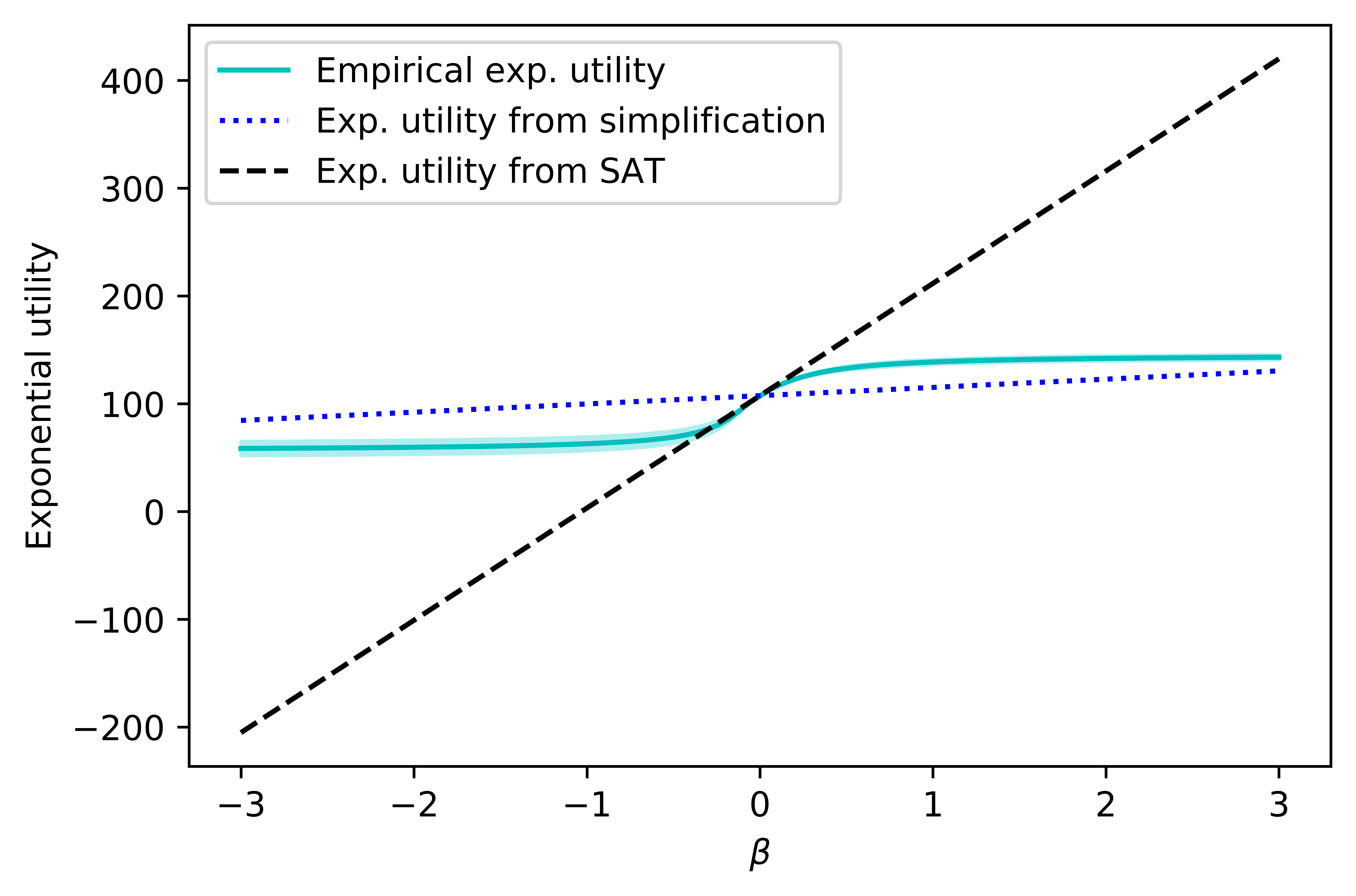} %MC_desc_v3_color
	\caption{The comparison among the empirical exponential utility risk, the estimated exponential utility risk from reward simplification, and the estimated exponential utility risk from SAT along the risk parameter $ \beta \in (-3, 3) $.} %denote $a(j,p(j | x,a,y))$, where
	\label{utility_3}
\end{figure}
\section{CONCLUSION AND FUTURE RESEARCH}
In this paper, we estimate the mean-variance risk and exponential risk with the return variance in a toy example.
With the aid of the SAT, the variance formula is extended for MDPs with a stochastic transition-based reward and a randomize policy.
Moreover, a definition of isotopic states is proposed in order to lump states and shrink the enlarged state space.
The numerical experiment illustrates the SAT with a state lumping, and validates the risk estimation methods.

%Three future works could be studied in related fields.
One future work is to deal with the augmented state space.
Though the isotopic class can be lumped into one state under conditions, the space is still very large in most cases.
Another is to evaluate or estimate a generic risk measure iteratively.
Currently, only a few risks can be evaluated in iterations.
The third is %about the specific methods, and the variance formula could be an example.
%Although the SAT extends a number of methods to some degree, there are still limitations of their applications, such as the variance formula can only be applied to Markov process with a discrete state space. 
%How 
to extend the methods for Markov processes with continuous time horizons and spaces in a risk-sensitive case. % could be a concern.

%\color{black}
%\begin{table}[h]
%\caption{An Example of a Table}
%\label{table_example}
%\begin{center}
%\begin{tabular}{|c||c|}
%\hline
%One & Two\\
%\hline
%Three & Four\\
%\hline
%\end{tabular}
%\end{center}
%\end{table}

%%%%%%%%%%%%%%%%%%%%%%%%%%%%%%%%%%%%%%%%%%%%%%%%%%%%%%%%%%%%%%%%%%%%%%%%%%%%%%%%

%%%%%%%%%%%%%%%%%%%%%%%%%%%%%%%%%%%%%%%%%%%%%%%%%%%%%%%%%%%%%%%%%%%%%%%%%%%%%%%%

%%%%%%%%%%%%%%%%%%%%%%%%%%%%%%%%%%%%%%%%%%%%%%%%%%%%%%%%%%%%%%%%%%%%%%%%%%%%%%%%
%\section*{APPENDIX}
%
%Appendixes should appear before the acknowledgment.

%\section*{ACKNOWLEDGMENT}
%
%The preferred spelling of the word ÒacknowledgmentÓ in America is without an ÒeÓ after the ÒgÓ. Avoid the stilted expression, ÒOne of us (R. B. G.) thanks . . .Ó  Instead, try ÒR. B. G. thanksÓ. Put sponsor acknowledgments in the unnumbered footnote on the first page.

%%%%%%%%%%%%%%%%%%%%%%%%%%%%%%%%%%%%%%%%%%%%%%%%%%%%%%%%%%%%%%%%%%%%%%%%%%%%%%%%

\bibliographystyle{ieeetr}
\bibliography{root.bib}

\begin{thebibliography}{10}

\bibitem{kusuoka2001law}
S.~Kusuoka, ``On law invariant coherent risk measures,'' in {\em Advances in
  Mathematical Economics}, pp.~83--95, Springer, 2001.

\bibitem{borkar2002q}
V.~S. Borkar, ``Q-learning for risk-sensitive control,'' {\em Mathematics of
  Operations Research}, vol.~27, no.~2, pp.~294--311, 2002.

\bibitem{garcia2015comprehensive}
J.~Garc{\'i}a and F.~Fern{\'a}ndez, ``A comprehensive survey on safe
  reinforcement learning,'' {\em Journal of Machine Learning Research},
  vol.~16, no.~1, pp.~1437--1480, 2015.

\bibitem{huang2017risk}
W.~Huang and W.~B. Haskell, ``{Risk-aware Q-learning for Markov decision
  processes},'' in {\em Proceedings of the 56th IEEE Conference on Decision and
  Control (CDC)}, pp.~4928--4933, 2017.

\bibitem{chow2017risk}
Y.~Chow, M.~Ghavamzadeh, L.~Janson, and M.~Pavone, ``Risk-constrained
  reinforcement learning with percentile risk criteria,'' {\em The Journal of
  Machine Learning Research}, vol.~18, no.~1, pp.~6070--6120, 2017.

\bibitem{berkenkamp2017safe}
F.~Berkenkamp, M.~Turchetta, A.~Schoellig, and A.~Krause, ``Safe model-based
  reinforcement learning with stability guarantees,'' in {\em Proceedings of
  the 31st Advances in Neural Information Processing Systems (NIPS)},
  pp.~908--918, 2017.

\bibitem{shuai2019satAAAI}
S.~Ma and J.~Y. Yu, ``State-augmentation transformations for risk-sensitive
  reinforcement learning,'' {\em arXiv:1804.05950v2:}, 2018.

\bibitem{nilim2005robust}
A.~Nilim and L.~E. Ghaoui, ``Robust control of {Markov} decision processes with
  uncertain transition matrices,'' {\em Operations Research}, vol.~53, no.~5,
  pp.~780--798, 2005.

\bibitem{ruszczynski2010risk}
A.~Ruszczy{\'{n}}ski, ``Risk-averse dynamic programming for {Markov} decision
  processes,'' {\em Mathematical Programming}, vol.~125, no.~2, pp.~235--261,
  2010.

\bibitem{D.J.White1988a}
D.~J. White, ``Mean , variance , and probabilistic criteria in finite {Markov}
  decision processes : A review,'' {\em Journal of Optimization Theory and
  Applications}, vol.~56, no.~1, pp.~1--29, 1988.

\bibitem{doi:10.1287/opre.42.1.175}
M.~J. Sobel, ``Mean-variance tradeoffs in an undiscounted {MDP},'' {\em
  Operations Research}, vol.~42, no.~1, pp.~175--183, 1994.

\bibitem{Mannor2011a}
S.~Mannor and J.~Tsitsiklis, ``Mean-variance optimization in {Markov} decision
  processes,'' in {\em Proceedings of the 28th International Conference on
  Machine Learning (ICML)}, pp.~1--22, 2011.

\bibitem{lau1980newsboy}
H.-S. Lau, ``The newsboy problem under alternative optimization objectives,''
  {\em Journal of the Operational Research Society}, vol.~31, no.~6,
  pp.~525--535, 1980.

\bibitem{choi2008mean}
T.-M. Choi, D.~Li, and H.~Yan, ``Mean-variance analysis for the newsvendor
  problem,'' {\em IEEE Transactions on Systems, Man, and Cybernetics-Part A:
  Systems and Humans}, vol.~38, no.~5, pp.~1169--1180, 2008.

\bibitem{chiu2016supply}
C.-H. Chiu and T.-M. Choi, ``Supply chain risk analysis with mean-variance
  models: A technical review,'' {\em Annals of Operations Research}, vol.~240,
  no.~2, pp.~489--507, 2016.

\bibitem{howard1972risk}
R.~A. Howard and J.~E. Matheson, ``Risk-sensitive {Markov} decision
  processes,'' {\em Management science}, vol.~18, no.~7, pp.~356--369, 1972.

\bibitem{chung1987discounted}
K.-J. Chung and M.~J. Sobel, ``Discounted {MDP}’s: Distribution functions and
  exponential utility maximization,'' {\em SIAM journal on control and
  optimization}, vol.~25, no.~1, pp.~49--62, 1987.

\bibitem{altman1999constrained}
E.~Altman, {\em Constrained Markov Decision Processes}.
\newblock CRC Press, 1999.

\bibitem{ravindran2002model}
B.~Ravindran and A.~G. Barto, ``Model minimization in hierarchical
  reinforcement learning,'' in {\em International Symposium on Abstraction,
  Reformulation, and Approximation}, pp.~196--211, Springer, 2002.

\bibitem{kemeny1976markov}
J.~G. Kemeny and J.~L. Snell, {\em Finite Markov Chains}.
\newblock Springer-Verlag, New York, 1976.

\bibitem{burke1958markovian}
C.~Burke and M.~Rosenblatt, ``A markovian function of a markov chain,'' {\em
  The Annals of Mathematical Statistics}, vol.~29, no.~4, pp.~1112--1122, 1958.

\bibitem{harrison1992performance}
P.~G. Harrison and N.~M. Patel, {\em Performance modelling of communication
  networks and computer architectures (International Computer S}.
\newblock Addison-Wesley Longman Publishing Co., Inc., 1992.

\bibitem{sobel1982variance}
M.~J. Sobel, ``The variance of discounted {Markov} decision processes,'' {\em
  Journal of Applied Probability}, vol.~19, no.~4, pp.~794--802, 1982.

\bibitem{xia2018mean}
L.~Xia, ``Mean-variance optimization of discrete time discounted markov
  decision processes,'' {\em Automatica}, vol.~88, pp.~76--82, 2018.

\bibitem{shen2014risk}
Y.~Shen, M.~J. Tobia, T.~Sommer, and K.~Obermayer, ``Risk-sensitive
  reinforcement learning,'' {\em Neural Computation}, vol.~26, no.~7,
  pp.~1298--1328, 2014.

\end{thebibliography}

%\addtolength{\textheight}{-23cm}   % This command serves to balance the column lengths
% on the last page of the document manually. It shortens
% the textheight of the last page by a suitable amount.
% This command does not take effect until the next page
% so it should come on the page before the last. Make
% sure that you do not shorten the textheight too much.

\end{document}